\def\eqref#1{equation~\ref{#1}}
\def\1{\bm{1}}
\def\vh{{\bm{h}}}
\def\vv{{\bm{v}}}
\DeclareMathAlphabet{\mathsfit}{\encodingdefault}{\sfdefault}{m}{sl}
\SetMathAlphabet{\mathsfit}{bold}{\encodingdefault}{\sfdefault}{bx}{n}
\def\gL{{\mathcal{L}}}
\def\gO{{\mathcal{O}}}
\def\sP{{\mathbb{P}}}
\newcommand{\E}{\mathbb{E}}
\newcommand{\Var}{\mathrm{Var}}
\newtheorem{theorem}{Theorem}
\newtheorem{proposition}[theorem]{Proposition} 
\newtheorem{corollary}[theorem]{Corollary}
\begin{document}

\title{Note on the bias and variance of variational inference}

\author{%
  Chin-Wei Huang$^{\star}$ $\qquad$ Aaron Courville$^{\star\dagger}$  \\
  $^{\star}$Mila, University of Montreal $\qquad^\dagger$CIFAR Fellow\\
  \texttt{\{chin-wei.huang,aaron.courville\}@umontreal.ca} \\
}
\date{}\maketitle

\begin{abstract}%
In this note, we study the relationship between the variational gap and the variance of the (log) likelihood ratio. 
We show that the gap can be upper bounded by some form of dispersion measure of the likelihood ratio, which suggests the bias of variational inference can be reduced by making the distribution of the likelihood ratio more concentrated, such as via averaging and variance reduction. 
\end{abstract}

\section{Introduction}
Let $\vv$ and $\vh$ denote the observed and unobserved random variables, following a joint density function $p_\theta(\vv,\vh)$. 
Generally, the log marginal likelihood $\log p_\theta(\vv)=\log\int_\vh p_\theta(\vv,\vh)d\vh$ is not tractable, so the \emph{Maximum likelihood principle} cannot be readily applied to estimate the model parameter $\theta$.  
Instead, one can maximize the \emph{evidence lower bound} (ELBO):
$$\log p_\theta(\vv) = \log \E_{q_\phi(\vh)}\left[ \frac{p_\theta(\vv,\vh)}{q_\phi(\vh)}\right] \geq \E_{q_\phi(\vh)}\left[\log\frac{p_\theta(\vv,\vh)}{q_\phi(\vh)}\right]:=\gL(\theta,\phi)$$
where the inequality becomes an equality if and only if $q(\vh)=p(\vh|\vv)$, since $\log$ is a strictly concave function. 
This way, learning and inference can be jointly achieved, by maximizing $\gL(\theta,\phi)$ wrt $\theta$ and $\phi$, respectively. 

Alternatively, one can maximize another family of lower bounds due to \citet{burda2015importance}:
$$\log p_\theta(\vv) = \log \E_{\vh_j\sim q_\phi(\vh)}\left[ \frac{1}{K}\sum_{j=1}^K\frac{p_\theta(\vv,\vh_j)}{q_\phi(\vh_j)}\right] \geq \E_{\vh_j\sim q_\phi(\vh)}\left[\log\frac{1}{K}\sum_{j=1}^K\frac{p_\theta(\vv,\vh)}{q_\phi(\vh)}\right]:=\gL_K(\theta,\phi)$$
which we call the \emph{importance weighted lower bound} (IWLB). 
Clearly $\gL_1 = \gL$.
An appealing property of this family of lower bounds is that $\gL_K$ is monotonic, i.e. $\gL_M\geq\gL_N$ if $M\geq N$, and can be made arbitrarily close to $\log p_\theta$ provided $K$ is sufficiently large. 

One interpretation for this is that by weighting the samples according to the importance ratio $p/q$, we are effectively correcting or biasing the proposal towards the true posterior $p_\theta(\vh|\vv)$; see \citet{cremer2017reinterpreting} for more details.  
Another interpretation due to \citet{nowozindebiasing} is to view $Y_K:=\log \frac{1}{K}\sum_{j=1}^K\frac{p_\theta(\vv,\vh)}{q_\phi(\vh)}$ as a biased estimator for $\log p_\theta(\vv)$, where the bias is of the order $\gO(K^{-1})$.

We take a different view by looking at the variance, or some notion of dispersion, of $Y_K$. 
We write $X_K:=\exp(Y_K)$ as the average before $\log$ is applied.
The variational gap, $\log\E[X_K]-\E[Y_K]$, is caused by (1) the strict concavity of $\log$, and (2) the dispersion of $X_K$. 
To see this, one can view the expectation $\E[Y_K]$ as the centroid of uncountably many $\log X_K$ weighted by its probability density, which lies below the graph of $\log$. 
By using a larger number of samples, the distribution of $X_K$ becomes more concentrated around its expectation $\E[X_K]=\log p_\theta(\vv)$, pushing the ``centroid'' up to be closer to the graph of $\log$.  
See Figure~\ref{fig:jensen_samples} for an illustration. 

This intuition has been exploited and ideas of correlating the likelihood ratios $X=p/q$ of a joint proposal $q(\vh_1,...,\vh_K)$ have been proposed in \citet{klys2018joint,wu2019differentiable,huang2019hierarchical}.
Even though attempts have been made to establish the connection between $\Var(X)$ and the gap (or bias)  $\log \E[X] - \E[\log X]$, the obtained results are asymptotic and require further assumption on boundedness (such as uniform integrability) of the sequence  $\{X_n\}_{n\geq1}$, which makes the results harder to interpret \footnote{For example, see \citet{klys2018joint,huang2019hierarchical} where they seek to minimize the variance to improve the variational approximation, and \citet{maddison2017filtering,domke2018importance} where they analyze the asymptotic bias by looking at the variance of $X_K$.  }. 
Rather than bounding the asymptotic bias by the variance of $X$, we analyze the non-asymptotic relationship between $\log \E[X]-\E[\log X]$ and the variance of $X$ and $\log X$.
Our finding justifies exploiting the structure of the likelihood ratios of a joint proposal, as anti-correlation among the likelihood ratio serves to further reduce the variance of an average, which we will show in the next section upper bounds the variational gap.

\begin{figure}
\centering
    \includegraphics[height=8cm,trim={0.0mm 0mm 10mm 0mm},clip]{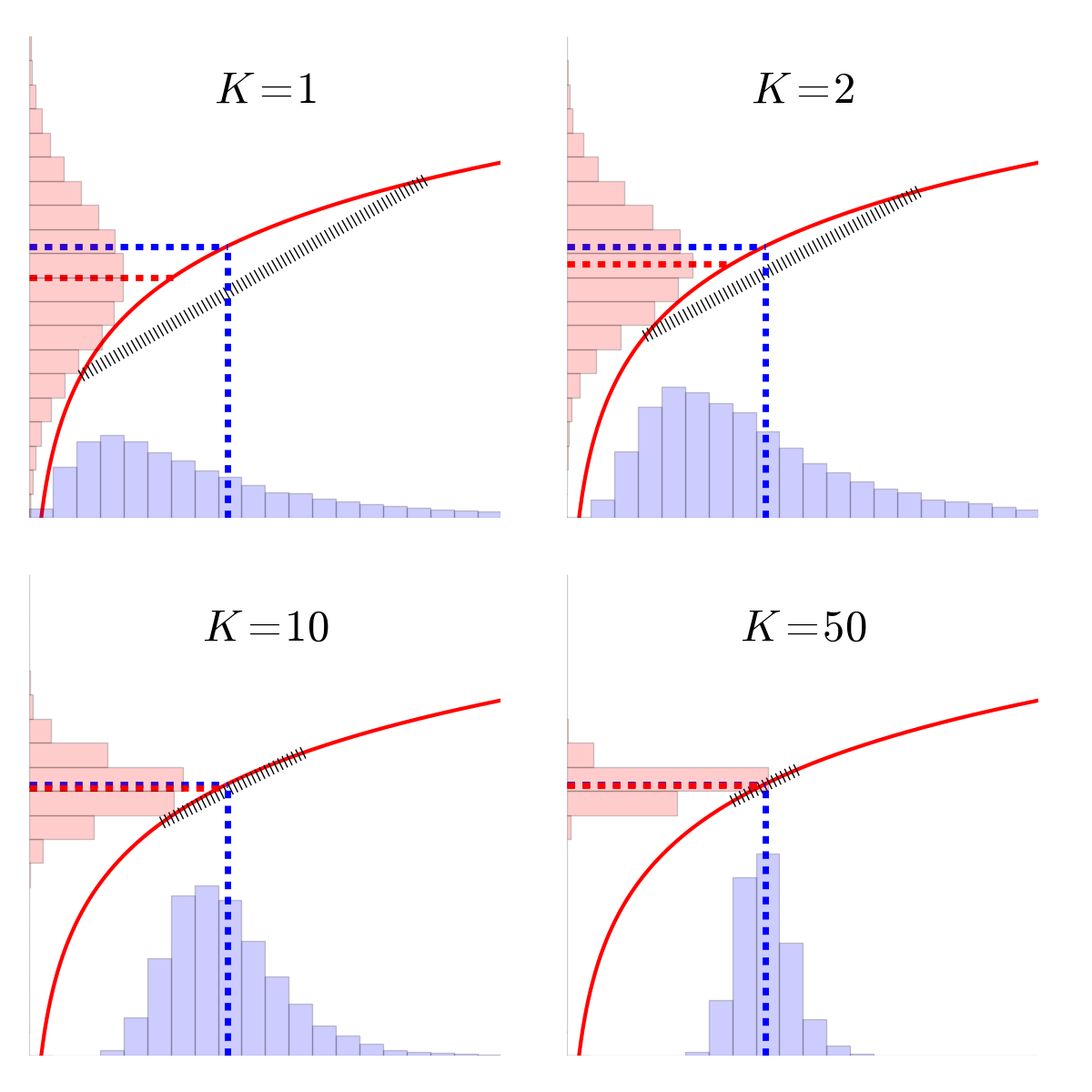}
    \caption{Visualizing the reduction in variaional gap and the concentration of the distribution of the likelihood ratio due to averaging. x-axis: $X_K$. y-axis: $Y_K$. The dotted lines indicate the expected values, and the solid line is the $\log$ function. }
    \label{fig:jensen_samples}
\end{figure}

\section{Bounding the gap via central tendency}

Let $\mu_X$ and $\nu_X$ be the mean and median \footnote{We assume there's a unique median to simplify the analysis.} of a random variable $X$, i.e.
$$\mu_X:=\E[X]\qquad \qquad  \sP(X\geq \nu_X)\wedge\,\sP(X\leq \nu_X)\geq\frac{1}{2}$$
Here we assume $X>0$ is a positive random variable.
One can think of it as $p/q$, or some other unbiased estimate of $p(\vv)$.
By Jensen's inequality, we know $\log\mu_X\geq\mu_Y=\E[\log X]$, where $Y:=\log X$. 
We want to bound the gap $\log\mu_X-\mu_Y$ via some notion of dispersion of $X$ and $Y$. 
Now assume
$\mu_X-\nu_X\leq C_X$ and $\mu_Y-\nu_Y\leq C_Y$. 
Constants $C_X$ and $C_Y$ correspond to the dispersion just mentioned.
For example, the following lemma shows $C_X$ can be taken to be the standard deviation $\sigma_X:=\sqrt{\E[(X-\mu_X)^2]}$:

\begin{proposition}
For $p\geq1$ and $X\in L_p$, then $|\mu_X-\nu_X|\leq ||X-\mu_X||_p$.
\end{proposition}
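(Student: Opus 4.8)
The plan is to route the inequality through the classical variational characterization of the median as a minimizer of the mean absolute deviation, and then apply Jensen's inequality twice. First I would observe that $X\in L_p$ with $p\ge1$ forces $X\in L_1$ on the underlying probability space, so $\mu_X$ is well defined and every expectation below is finite.

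The first step is to establish that $\E|X-\nu_X|\le\E|X-a|$ for every $a\in\R$, and in particular for $a=\mu_X$. For fixed $a>\nu_X$ I would use the pointwise bound
$$|x-a|-|x-\nu_X|\ \ge\ (a-\nu_X)\big(\mathbf{1}[x\le\nu_X]-\mathbf{1}[x>\nu_X]\big),$$
which holds with equality on $\{x\le\nu_X\}$ and, on $\{x>\nu_X\}$, follows from $|x-\nu_X|=x-\nu_X$ together with $|x-a|\ge x-a$. Integrating and using $\sP(X\le\nu_X)\ge\tfrac12\ge\sP(X>\nu_X)$ gives $\E|X-a|\ge\E|X-\nu_X|$; the case $a<\nu_X$ is handled symmetrically using $\sP(X\ge\nu_X)\ge\tfrac12$. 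This is precisely the point where the median property — and the standing assumption that the median is unique — enters.

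The second step is two applications of Jensen's inequality. Applying Jensen to the convex function $|\cdot|$ gives $|\mu_X-\nu_X|=|\E[X-\nu_X]|\le\E|X-\nu_X|$; by the first step this is at most $\E|X-\mu_X|$; and applying Jensen to the convex map $t\mapsto t^p$ on $[0,\infty)$ (equivalently, monotonicity of $L_p$ norms on a probability space) gives $\E|X-\mu_X|\le(\E|X-\mu_X|^p)^{1/p}=\|X-\mu_X\|_p$. Chaining these three inequalities yields $|\mu_X-\nu_X|\le\|X-\mu_X\|_p$.

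The only genuine obstacle is the first step: verifying the pointwise comparison together with its symmetric counterpart, and checking that the sign of the resulting expectation lines up with the defining inequalities of the median. Everything after that — the two Jensen steps and the reduction from $a=\mu_X$ back to the statement — is mechanical.
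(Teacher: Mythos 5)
Your proof is correct and follows exactly the same chain of inequalities as the paper's: $|\mu_X-\nu_X|=|\E[X-\nu_X]|\leq\E[|X-\nu_X|]\leq\E[|X-\mu_X|]\leq\|X-\mu_X\|_p$, using the median's mean-absolute-error minimality and Jensen twice. The only difference is that you supply a full verification of the median's variational property, which the paper simply cites as a known fact.
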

\begin{proof}
Using the fact that the median minimizes the \emph{mean absolute error} and Jensen's inequality, we have
$$|\mu_X-\nu_X|=|\E[X-\nu_X]|\leq\E[|X-\nu_X|]\leq\E[|X-\mu_X|]\leq||X-\mu_X||_p$$
\end{proof}

Without further assumptions, we can derive a weaker result. 
Since $\log$ is strictly monotonic, $\log\nu_X=\nu_Y$, so we have
$\log\mu_X-\mu_Y = \log\mu_X - \log \nu_X + \nu_Y - \mu_Y$.
Since $\nu_X\geq \mu_X-C_X$, by monotonicity of $\log$, 
$$\log\mu_X-\mu_Y \leq \log\mu_X - \log (\mu_X-C_X) + \nu_Y - \mu_Y$$
which after arrangement gives
$\log (\mu_X-C_X)-\mu_Y \leq  \nu_Y - \mu_Y\leq C_Y$. 
This means if $C_X$ is small enough so that the difference between $\log\mu_X$ and $\log(\mu_X-C_X)$ can be neglected, then the gap of interest is bounded by the dispersion of $Y$, $C_Y$. 

Now, we quantify the error between $\log\mu_X$ and $\log(\mu_X-C_X)$ by the following result:

\begin{figure}
	\centering
	\includegraphics[height=7.5cm,trim={3.0mm 1.5mm 2.5mm 1.0mm},clip]{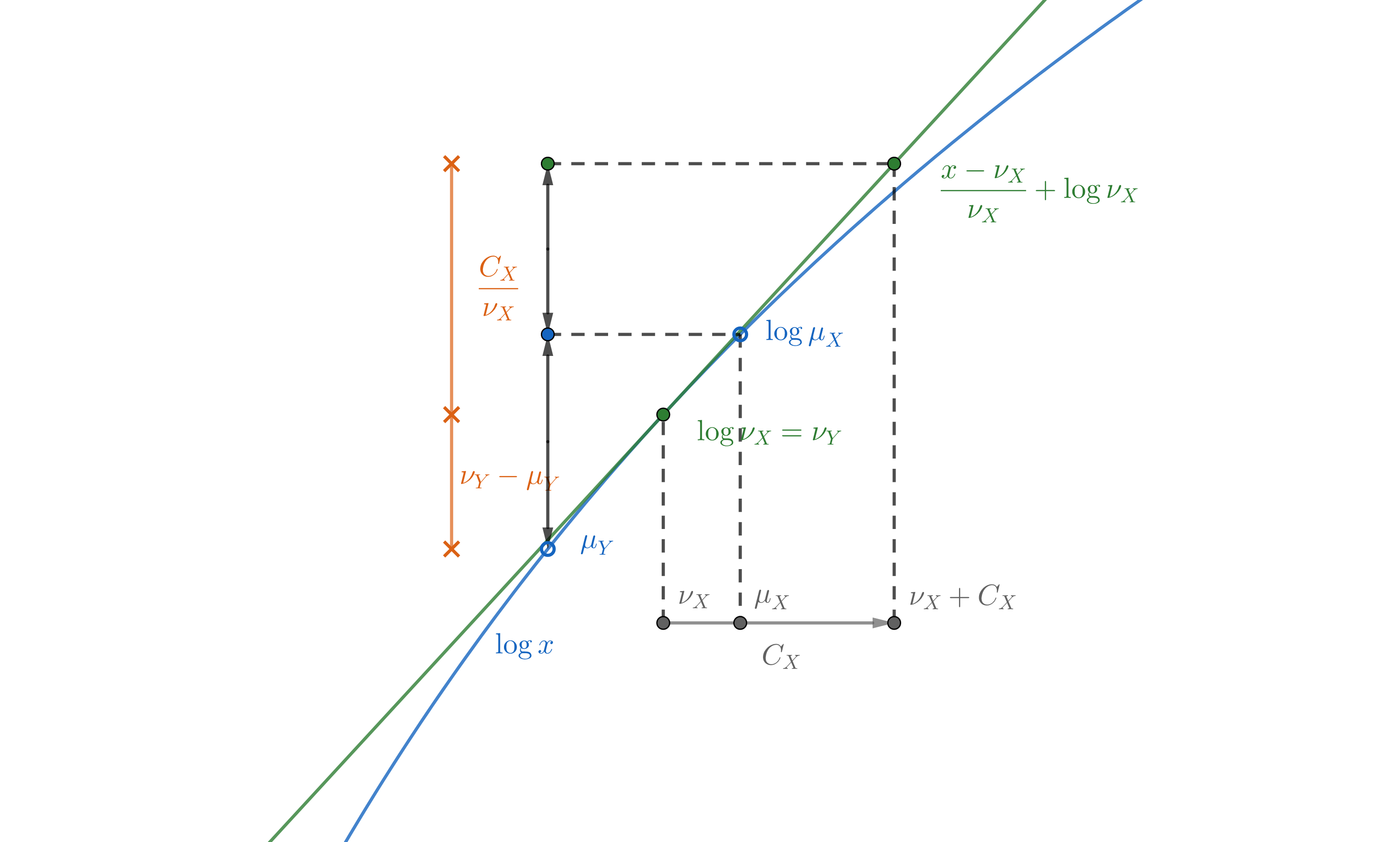}
	\caption{Bounding the variational gap using a linear majorizer (green curve) of $\log$ (blue curve) at $x=\nu_X$.}
	\label{fig:jensen_bound}
\end{figure}

\begin{proposition}
Let $X>0$ be a positive random variable with $\mu_X=\E[X]$, and $Y=\log X$ with $\mu_Y=\E[Y]=\E[\log X]\leq\log\E[X]=\log\mu_X$. 
Assume 
$$|\mu_X-\nu_X|\leq C_X \qquad { (*) } \qquad \text{ and }  \qquad  |\mu_Y-\nu_Y|\leq C_Y \qquad { (\dagger) } $$
for some constants $C_X,C_Y\geq0$.
If $\mu_X> C_X$, then $$\log \mu_X - \mu_Y \leq \frac{C_X}{\mu_X-C_X} + C_Y$$
\end{proposition}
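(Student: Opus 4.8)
The plan is to follow the decomposition already begun before the statement, but to replace the crude estimate of $\log\mu_X-\log(\mu_X-C_X)$ by a tangent-line bound — exactly the ``linear majorizer'' depicted in Figure~\ref{fig:jensen_bound}. The starting point is that $\log$ is strictly increasing, so it carries the median of $X$ to the median of $Y$, i.e. $\log\nu_X=\nu_Y$. Hence
\[
\log\mu_X-\mu_Y=\bigl(\log\mu_X-\log\nu_X\bigr)+\bigl(\nu_Y-\mu_Y\bigr),
\]
and the second bracket is at most $C_Y$ by hypothesis $(\dagger)$. So the whole task reduces to showing $\log\mu_X-\log\nu_X\leq \frac{C_X}{\mu_X-C_X}$.

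For this I would invoke concavity of $\log$: the tangent line at the point $\nu_X$ lies above the curve, so $\log\mu_X\leq\log\nu_X+\frac{\mu_X-\nu_X}{\nu_X}$, that is $\log\mu_X-\log\nu_X\leq\frac{\mu_X-\nu_X}{\nu_X}$. Now I apply $(*)$ twice. If $\mu_X-\nu_X\leq 0$ the right-hand side is already nonpositive and there is nothing to prove; otherwise $\mu_X-\nu_X\leq C_X$ bounds the numerator, while $\nu_X\geq\mu_X-C_X$ bounds the denominator from below. Since $\mu_X>C_X$, the quantity $\mu_X-C_X$ is strictly positive, so $t\mapsto \frac{C_X}{t}$ is non-increasing on the relevant range and $\frac{\mu_X-\nu_X}{\nu_X}\leq\frac{C_X}{\nu_X}\leq\frac{C_X}{\mu_X-C_X}$. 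Combining the two brackets yields the claim.

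The only points needing care — and they are minor — are the sign of $\mu_X-\nu_X$ and the strict positivity of the denominator before dividing; the hypothesis $\mu_X>C_X$ together with $(*)$ handles both. A completely equivalent variant avoids the picture entirely: write $\log\mu_X-\log\nu_X\leq\log\mu_X-\log(\mu_X-C_X)=-\log\!\bigl(1-\tfrac{C_X}{\mu_X}\bigr)$ and use the elementary inequality $-\log(1-u)\leq\frac{u}{1-u}$ for $u\in[0,1)$ with $u=\tfrac{C_X}{\mu_X}$, which is the same tangent estimate in disguise. I do not expect any genuine obstacle here: the substance of the proposition is the clean repackaging of Jensen's gap into the two dispersion constants $C_X$ and $C_Y$, not the one-line convexity estimate that powers it.
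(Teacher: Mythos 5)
Your proof is correct and is essentially the paper's own argument: both decompose the gap via $\log\nu_X=\nu_Y$, bound $\nu_Y-\mu_Y$ by $C_Y$, and control $\log\mu_X-\log\nu_X$ with the tangent line to $\log$ at $\nu_X$ together with $(*)$ and the positivity of $\mu_X-C_X$. Your version is marginally more careful (handling the sign of $\mu_X-\nu_X$ and the strict positivity of the denominator explicitly, where the paper's proof glosses over these and even writes $\mu_X\geq C_X$ instead of the stated strict inequality), but there is no substantive difference in approach.
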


A visual illustration of the proof is presented in Figure~\ref{fig:jensen_bound}.
The main idea is to use Taylor approximation as a linear upper bound on the $\log$, so that the error in using $\mu_X-C_X$  to approximate $\mu_X$ can be translated to the log scale. 
Hence the additional term $C_X/(\mu_X-C_X)$ is inversely propostional to $\mu_X-C_X$, i.e. the derivative of $\log$, which is the slope of the linear upper bound. 

\begin{proof}
Since $\log$ is a strictly concave function, first-order Taylor approximation (at $\nu_X$) gives a linear upper bound:
$$f(x):= \frac{1}{\nu_X}(x-\nu_X) + \log \nu_X \geq \log(x)$$
By monotonicity of logarithm and $(*)$, 
$\log \mu_X - \mu_Y \leq \log(\nu_X+C_X) - \mu_Y$. 
The logarithm can be bounded from above by the linear upperbound $f(\nu_X+C_X)$, which yields 
$$\log \mu_X - \mu_Y \leq f(\nu_X+C_X) - \mu_Y=\frac{C_X}{\nu_X} + \log \nu_X -\mu_Y$$
Notice that $\log \nu_X = \nu_Y$ (since $\log$ is strictly monotonic), so that we can plug in $(\dagger)$.
Now the premise $\mu_X\geq C_X$ combined with $(*)$ again yields $\frac{1}{\nu_X}\leq \frac{1}{\mu_X-C_X}$, concluding the proof. 
\end{proof}

The main takeaway of the proposition is that if the dispersion of $X$ is sufficiently small, then minimizing the standard deviation of $X$ and $\log X$ amounts to minimizing the gap $\log\E[X]-\E[\log X]$.
We summarize it by the following Corollary:
\begin{corollary}
Let $X>0$ be an unbiased estimator for the marginal likelihood $p(\vv)$, and let $Y=\log X$. 
Denote by $\sigma_X$ and $\sigma_Y$ the standard deviation of $X$ and $Y$, respectively. Then
$$\sigma_X<p(\vv) \quad \Longrightarrow \quad \log p(\vv) - \E[\log X] \leq \frac{\sigma_X}{p(\vv)-\sigma_X} + \sigma_Y$$
\end{corollary}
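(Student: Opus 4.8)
The plan is to obtain the Corollary as an immediate specialization of the preceding Proposition, using the Proposition on dispersion and standard deviation to discharge the hypotheses $(*)$ and $(\dagger)$. First I would note that by assumption $X > 0$ is an unbiased estimator of the marginal likelihood, so $\mu_X = \E[X] = p(\vv)$; in particular $\log\mu_X = \log p(\vv)$. Setting $Y = \log X$ and $\mu_Y = \E[\log X]$, the quantity we want to bound, $\log p(\vv) - \E[\log X]$, is exactly $\log\mu_X - \mu_Y$, which is nonnegative by Jensen's inequality.

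Next I would invoke the earlier Proposition (the one stating $|\mu_Z - \nu_Z| \leq \|Z - \mu_Z\|_p$ for $Z \in L_p$, $p \geq 1$) with $p = 2$, applied once to $Z = X$ and once to $Z = Y$. This yields $|\mu_X - \nu_X| \leq \|X - \mu_X\|_2 = \sigma_X$ and $|\mu_Y - \nu_Y| \leq \|Y - \mu_Y\|_2 = \sigma_Y$. So the hypotheses $(*)$ and $(\dagger)$ of the main Proposition hold with $C_X = \sigma_X$ and $C_Y = \sigma_Y$. (I should note in passing that this requires $X$ and $Y$ to have finite second moments; since $\sigma_X$ and $\sigma_Y$ appear in the statement, this is implicitly assumed, so I would just use it without further comment, or flag it as a standing assumption.)

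Then I would observe that the hypothesis $\sigma_X < p(\vv)$ is precisely the condition $\mu_X > C_X$ required by the Proposition (with $\mu_X = p(\vv)$ and $C_X = \sigma_X$). Applying the Proposition directly gives
$$\log p(\vv) - \E[\log X] = \log\mu_X - \mu_Y \leq \frac{C_X}{\mu_X - C_X} + C_Y = \frac{\sigma_X}{p(\vv) - \sigma_X} + \sigma_Y,$$
which is the claimed inequality.

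Honestly, there is no real obstacle here: the Corollary is a packaging of the two Propositions, and the only thing to be careful about is matching up notation ($C_X \leftrightarrow \sigma_X$, $C_Y \leftrightarrow \sigma_Y$, $\mu_X \leftrightarrow p(\vv)$) and making sure the strict inequality $\sigma_X < p(\vv)$ lines up with the Proposition's premise $\mu_X > C_X$ (noting the Proposition's proof actually only needs $\mu_X \geq C_X$ for the final step, so the strict version is more than enough, and strictness is anyway needed for the right-hand side to be finite and well-defined). I would keep the proof to three or four lines.
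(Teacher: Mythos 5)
Your proof is correct and is exactly the route the paper intends: the Corollary is stated as a direct packaging of the two Propositions, obtained by taking $p=2$ in the first Proposition to get $C_X=\sigma_X$ and $C_Y=\sigma_Y$, using unbiasedness to identify $\mu_X=p(\vv)$, and then invoking the second Proposition. Your side remarks about finite second moments and about strict versus non-strict inequality are sensible and do not change anything.
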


\acks{We would like to thank Kris Sankaran for proofreading the note.}




\bibliography{main}

\begin{thebibliography}{8}
\providecommand{\natexlab}[1]{#1}
\providecommand{\url}[1]{\texttt{#1}}
\expandafter\ifx\csname urlstyle\endcsname\relax
  \providecommand{\doi}[1]{doi: #1}\else
  \providecommand{\doi}{doi: \begingroup \urlstyle{rm}\Url}\fi

\bibitem[Burda et~al.(2015)Burda, Grosse, and
  Salakhutdinov]{burda2015importance}
Yuri Burda, Roger Grosse, and Ruslan Salakhutdinov.
\newblock Importance weighted autoencoders.
\newblock In \emph{International Conference on Learning Representations}, 2015.

\bibitem[Cremer et~al.(2017)Cremer, Morris, and
  Duvenaud]{cremer2017reinterpreting}
Chris Cremer, Quaid Morris, and David Duvenaud.
\newblock Reinterpreting importance-weighted autoencoders.
\newblock \emph{arXiv preprint arXiv:1704.02916}, 2017.

\bibitem[Domke and Sheldon(2018)]{domke2018importance}
Justin Domke and Daniel~R Sheldon.
\newblock Importance weighting and variational inference.
\newblock In \emph{Advances in Neural Information Processing Systems}, pages
  4470--4479, 2018.

\bibitem[Huang et~al.(2019)Huang, Sankaran, Dhekane, Lacoste, and
  Courville]{huang2019hierarchical}
Chin-Wei Huang, Kris Sankaran, Eeshan Dhekane, Alexandre Lacoste, and Aaron
  Courville.
\newblock Hierarchical importance weighted autoencoders.
\newblock In \emph{International Conference on Machine Learning}, 2019.

\bibitem[Klys et~al.(2018)Klys, Bettencourt, and Duvenaud]{klys2018joint}
Jack Klys, Jesse Bettencourt, and David Duvenaud.
\newblock Joint importance sampling for variational inference.
\newblock 2018.

\bibitem[Maddison et~al.(2017)Maddison, Lawson, Tucker, Heess, Norouzi, Mnih,
  Doucet, and Teh]{maddison2017filtering}
Chris~J Maddison, John Lawson, George Tucker, Nicolas Heess, Mohammad Norouzi,
  Andriy Mnih, Arnaud Doucet, and Yee Teh.
\newblock Filtering variational objectives.
\newblock In \emph{Advances in Neural Information Processing Systems}, pages
  6573--6583, 2017.

\bibitem[Nowozin(2018)]{nowozindebiasing}
Sebastian Nowozin.
\newblock Debiasing evidence approximations: On importance-weighted
  autoencoders and jackknife variational inference.
\newblock In \emph{International Conference on Learning Representations}, 2018.

\bibitem[Wu et~al.(2019)Wu, Goodman, and Ermon]{wu2019differentiable}
Mike Wu, Noah Goodman, and Stefano Ermon.
\newblock Differentiable antithetic sampling for variance reduction in
  stochastic variational inference.
\newblock In \emph{Artificial Intelligence and Statitics (AISTATS)}, 2019.

\end{thebibliography}

\end{document}